\title{{\bf Situation Calculus by Term Rewriting}}
\author{David A. Plaisted}{Department of Computer Science, UNC Chapel Hill, Chapel Hill, NC 27599-3175, U.S.A.,Phone: (919) 590-6051}{plaisted@cs.unc.edu}{}{}
\authorrunning{D.\,A. Plaisted}
\keywords{Term rewriting systems, situation calculus, frame problem, actions, fluents}%TODO mandatory; please add comma-separated list of keywords
\begin{document}

\maketitle
\begin{abstract}
A version of the situation calculus in which situations are represented as first-order terms is presented.  Fluents can be computed from the term structure, and actions on the situations correspond to rewrite rules on the terms.  Actions that only depend on or influence a subset of the fluents can be described as rewrite rules that operate on subterms of the terms in some cases.  If actions are bidirectional then efficient completion methods can be used to solve planning problems.  This representation for situations and actions is most similar to the fluent calculus of Thielscher \cite{Thielscher98}, except that this representation is more flexible and more use is made of the subterm structure.  Some examples are given, and a few general methods for constructing such sets of rewrite rules are presented.
\end{abstract}

\section{Introduction}
The situation calculus permits reasoning about properties of
situations that result from a given situation by sequences of actions
 \cite{McHa:69}.  In the situation calculus, situations (states) are represented
explicitly by variables, and actions $a$ map states $s$ to states $do(a,s)$.
Predicates and functions on a situation or state are called {\em fluents}.  In
some formalisms, a situation denotes a state of the world, specifying
the values of fluents, so that two situations are equal if the values
of all their fluents are the same.  Other formalisms reserve the term
situation for a sequence of states.  A problem with the situation
calculus or any formalism for reasoning about actions is the necessity
to include a large number of {\em frame axioms} that express the fact
that actions do not influence many properties (fluents) of a
state.  Since the early days of
artificial intelligence research the frame problem has been studied,
beginning with McCarthy and Hayes \cite{McHa:69}.  Lin \cite{Lin:08}
has written a recent survey of the situation calculus.

Reiter \cite{Reiter:91} proposed an approach
to the frame problem in first-order logic that avoids the need to
specify all of the frame axioms.  The method of Reiter, foreshadowed
by Haas \cite{Haas:87}, Pednault \cite{Pednault:89}, Schubert
 \cite{schubert90monotonic} and Davis \cite{Davis:02}, essentially
solves the frame problem by specifying that a change in the truth
value of a fluent, caused by an action, is equivalent to a certain
condition on the action.  In this formalism, it is only necessary to
list the actions that change each fluent, and it is not necessary to
specify the frame axioms directly.  If an action does not satisfy the
condition, the fluent is not affected.  In the following discussion the term ``Reiter's
formalism'' will be used for simplicity even though others have also
contributed to its development.  The {\em fluent calculus} \cite{Thielscher98} is another
interesting approach to the frame problem.  In this approach, a state is a conjunction of known facts

Petrick \cite{Petrick08} has adapted Reiter's formalism to knowledge and belief and has also introduced the notion of a
Cartesian situation that can decompose a situation into parts, in a way that appears to be similar to the aspect calculus.  However,
his formalism also considers a situation to include a sequence of states.

There are some problems with Reiter's formalism, especially in its suitability for first-order theorem provers.  In Reiter's formalism, the successor state axiom for a fluent essentially says that the fluent is true on a situation $do(a,s)$ for fluent $a$ and situation $s$ if $a$ is an action
that makes the fluent true, or if the fluent was already true and $a$ is not one of the actions that makes the fluent false.  This requires one to know under what conditions an action changes
the value of the fluent to "true" or "false."  If for example the action is nondeterministic this may be
difficult to know.  Also, to formulate the successor state axiom, one needs a theory of equality
between actions.  If there are only a small number of actions that
can make a fluent false, then Reiter's formalism is concise because one need not list all of the actions that do not influence the fluent (the frame axioms for the fluent).  However, if there are many 
actions (possibly thousands or millions) that influence the fluent, then this successor state axiom can become very long.  Further, when converting Reiter's approach to clause form, one needs an axiom of the form
"For all actions $a$, $a = a_1 \vee a = a_2 \vee \dots \vee a = a_n$" where $a_i$ are all the possible actions, as well as the axioms $a_i \neq a_j$ for all $i \neq j$.  If there
are many actions, the first axiom will be huge.  It is also difficult for many theorem provers to handle axioms of this form.

Even the successor state axiom itself, when
translated into clause form, produces clauses having a disjunction of an equation and another literal.  Using $\Phi(p,s)$ to denote the value of fluent $p$ on situation $s$, a simple form of the successor state axiom would be
\[\Phi(p, do(x,s)) \equiv [(\Phi(p, s) \wedge (x \neq a_1) \wedge (x \ne a_2)) \vee
(x  = b_1 \vee x = b_2)]\]
where $a_1$ and $a_2$ are the only actions that can make $p$ false and $b_1$ and $b_2$ are the
only actions that make
$p$ true.  Consider an even simpler form:
\[\Phi(p, do(x,s)) \equiv [(\Phi(p, s) \wedge (x \neq a_1) ) \vee
(x  = b_1 )]\]
The clause form of the latter is $\neg \Phi(p, do(x,s)) \vee \Phi(p, s) \vee x = b_1, \neg \Phi(p, do(x,s)) \vee x \neq a_1 \vee x = b_1, x \neq b_1 \vee \Phi(p, do(x,s)), \neg \Phi(p, s) \vee x = a_1 \vee \Phi(p, do(x,s))$.  Such conjunctions of equations and inequations can be difficult for theorem provers to handle, especially if there are more actions in which case there would be more equations and inequations in the clauses.

\section{Underlying Theory}
We assume that there is some underlying set ${\cal U}$  of axioms in first-order logic concerning states, fluents, and
actions. The semantics of this axiomatization will have domains for states and actions, with fluents mapping from states to various domains.   

Actions in ${\cal U}$ are typically indicated by the letter $a$, possibly with subscripts, and fluents are typically indicated by the letters $p$ and $q$, possibly with subscripts.  ${\cal F}$ is the set of all fluents and ${\cal A}$ is the set of actions.  States are denoted by
$s'$, $t'$, and $u'$,  possibly with subscripts.  The set of states is ${\cal S}$.

If $a$ is an action and $s'$ is a state then $do(a,s')$ is the result of applying action $a$ in state $s'$.  If $p$ is a fluent then $\Phi(p,s')$ is the value of $p$ on state $s'$.  Thus fluents are
essentially functions from states to various domains.  If the value of a fluent is $true$ or $false$, and it is not parameterized, then $\Phi(p,s')$ may be written as $p(s')$ instead.  The semantics (interpretation) of the underlying theory ${\cal T}$ is assumed to have sorts for fluents, states, and actions, in addition to possibly others.

\section{Syntax}
Term rewriting systems\cite{bani:98} have a simple syntax and semantics.  In this paper a situation calculus based on first-order term
rewriting is presented.  Situations are represented by terms and actions are represented by rewrite rules that operate on terms.  This
representation permits completion procedures \cite{} to be used for planning if actions are bidirectional, and also makes use of the subterm
structure of terms to separate actions whose effects are independent.  This can improve the efficiency of the planning process.  This
representation is probably most closely related to the fluent calculus of Thielscher \cite{} among the approaches that have been proposed to date.

The basics of term rewriting systems are as follows.
\subsection{Terms}
The symbols $f, g, h$ are {\em function symbols}, $x, y, z$ are {\em variables}, $r, s, t, u, v, w$ are {\em terms}, and $a, b, c$  are {\em individual constants}.  Also, $i,j,k$ are variables that are intended to denote integers.  $F$ is the set of function symbols and
$X$ is the set of variables.

The {\em arity} of a function symbol is the number of arguments it takes.  We assume there is a bound on the maximum arity of any function symbol.  Terms are defined as follows:  A variable or an individual constant is a term.  Also, if $f$ has arity $n$ and $t_1, \dots, t_n$ are
terms then $f(t_1, \dots, t_n)$ is a term.  
The set of terms over a set $F$ of function symbols and a set
$X$ of variables is denoted $T[F,X]$.
A term is a {\em ground term} if it contains no variables.  The notation $s \equiv t$ for terms $s$ and $t$ indicates that the terms are syntactically identical. A term $s$ is a
{\em subterm} of $f(t_1, \dots, t_n)$ if $s \equiv f(t_1, \dots, t_n)$ or $s$ is a subterm of $t_i$ for some $i$.  Also,
$s$ is a {\em proper} subterm of $f(t_1, \dots, t_n)$ if $s$ is a subterm of $t_i$ for some $i$.  A {\em maximal term} in a set $T$ of terms is a term in $t$ that is not a proper subterm of any other term in $T$.  The size $|s|$ of a term $s$ is
defined as follows:  $|x| = |c| = 1$ for variables $x$ and individual constants $c$.  Also, $|f(t_1, \dots, t_n)| =
1 + |t_1| + \dots + |t_n|$.

A {\em context} is a term with one occurrence of $\Box$ in it, such as $f(a, \Box, b)$.  This is written as $t[~]$ and the result of substituting some term $u$ for $\Box$ is written as $t[u]$.  This notation can be extended to $t[u_1, \dots, u_n]$ indicating specific occurrences of the subterms $u_1, \dots, u_n$.

A {\em substitution} is a replacement of variables $x_i$ by terms $t_i$; this can be written as $\{x_1 \leftarrow t_1, \dots, x_n \leftarrow t_n\}$ or as $\{t_1/x_1, ... t_n/x_n\}$.  Also, one can write $t\{t_i/x_i\}$ as $t (t_i/x_i)$.  Greek symbols such as $\theta$ are commonly used for substitutions.  The result of applying a substution $\theta$ to a term $t$ is written as $t \theta$.  The term $t\theta$ is called an {\em instance} of $t$.  Two  terms $r$ and $s$ are {\em unifiable} if they have a common instance.

\subsection{Term Rewriting Systems}
A {\em rewrite rule} is of the form $r \rightarrow s$ where $r$ and $s$ are terms and all variables in $s$ also occur in $r$.  A {\em term rewriting system} is a finite or infinite set of rewrite rules.  The relation $\Rightarrow$ is defined by $t_1 \Rightarrow_R t_2$ iff there is a context $t[~]$ such that $t_1 \equiv t[r\theta]$ and $t_2 \equiv t[s\theta]$ for some rewrite rule $r \rightarrow s$ in $R$ and some substitution $\theta$.  For example, if the rule $f(g(x)) \rightarrow f(h(x,x))$ is in $R$, then $f(f(g(h(a,b)))) \Rightarrow_R f(f(h(h(a,b),h(a,b))))$.  The subterm occurrence $r\Theta$ is called a {\em redex}.  Also, $\Rightarrow_R^*$ is the (reflexive) transitive closure of $\Rightarrow_R$.  A sequence $t_1 \Rightarrow_R t_2 \Rightarrow_R t_3 \dots$ is called a {\em rewrite sequence}.  The system $R$ is {\em terminating} if it has no infinite rewrite sequences.  A term $s$ is {\em reducible} for $R$ if there is a term $t$ such that $s \Rightarrow_R t$; otherwise $s$ is {\em irreducible}.  If $s \Rightarrow_R^* t$ and $t$ is irreducible then one writes $s \Rightarrow !_R t$ and $t$ is called a {\em normal form} of $s$.  A term rewriting system $R$ is {\em confluent} if for all terms $s, t_1$, and $t_2$, if $s \Rightarrow_R^* t_1$ and $s \Rightarrow_R^* t_2$ then there is a term $u$ such that $t_1 \Rightarrow_R^* u$ and $t_2 \Rightarrow_R^* u$.
$R$ is {\em bidirectional} or {\em invertible} if $s \Rightarrow_R t$ implies
$t \Rightarrow_R s$ for all terms $s,t$ in $T[F,X]$.

\section{Terminology}

Instead of situations, we shall actually refer to {\em states}. Recall from above that the symbols $s', t', u'$ refer to states and the set of states is ${\cal S}$.  The set of actions is ${\cal A}$ and $a$, $b$ refer to actions.  If  $s'$ is
a state and $a$ is an action then $do(a,s')$ is a state obtained by performing action $a$ on state $s'$.  There are also {\em fluents},
which map states onto various domains; ${\cal F}$ is the set of fluents and the symbols $p,q$ refer to fluents.  If $s'$ is a state and $p$ is a fluent then $\Phi(p,s')$ is
the value of fluent $p$ on state $s'$.  We assume that ${\cal U}$ satisfies the fluent
dependence condition:

\begin{definition}
\label{fluent.dependence.condition}
The {\em fluent dependence condition} is the following: For
all states $s'$ and $t'$,
if $\Phi(p,s') = \Phi(p,t')$ for all fluents $p$ then $do(a,s') = do(a,t')$ so that the effect
of an action depends only on the fluents of the state.  In fact, it may be best to assume
that if $\Phi(p,s') = \Phi(p,t')$ for all fluents $p$ then $s' = t'$.
\end{definition}

There may be some
combinations of fluents that do not correspond to states in the underlying theory $\cal U$.
Thus there may be a {\em fluent constraint} that specifies which combinations of fluents
correspond to states in ${\cal S}$.  In our examples the fluent constraint is typically always
satisfied so that all combinations of fluents correspond to states in ${\cal U}$.

\begin{definition}
If $\{(p_1,v_1), \cdots, (p_n,v_n)\}$ is a set of fluents together with their possible
values, then $\chi(\{(p_1,v_1), \cdots, (p_n,v_n)\})$ is true if there is a state
$s' \in {\cal S}$ such that $\Phi(p_i,s') = v_i$ for $1 \le i \le n$; otherwise it is false.
\end{definition}

Based on ${\cal U}$ we construct a term-rewriting system $R_{\cal U}$ to simulate
the actions of ${\cal U}$ using rewrite rules.

\begin{definition}
Let $T[F,X]$ be the set of first-order terms over a finite set $F$ of {\em function symbols} and a set $X$ of {\em variables}.  The symbols $r,s,t,u,v,w$ will represent terms.
\end{definition}

\begin{definition}
\label{sigma.definition}
Let ${\cal T} \subseteq T[F,X]$ be a set of first-order terms that represent states
in ${\cal U}$ and suppose $\sigma$ is a function mapping such terms to
states:  $\sigma : {\cal T} \rightarrow {\cal S}$.  Thus for all $t \in {\cal T}$,
$\sigma(t)$ satisfies the fluent constraint in ${\cal U}$.  For now we assume all terms in ${\cal T}$
are ground terms so that for all $t \in {\cal T}$, for all fluents $p$, $\Phi(p,\sigma(t))$ is
defined.
\end{definition}

We assume the following:

\begin{equation}
\label{term.set.definition}
\mbox{For all $s' \in {\cal S}$ there exists a ground term $s \in {\cal T}$ such that $\sigma(s) = s'$.}
\end{equation}

\begin{definition}
\label{hat.Phi.definition}
Let $\hat{\Phi}(p,t)$ be a procedure in $R_{\cal U}$ such that for all fluents $p$ and terms $t$,
$\hat{\Phi}(p,t) = \Phi(p,\sigma(t))$.  We assume $\hat{\Phi}(p,t)$ is computable as a
function of $t$, that is, the fluents can be determined from the term structure of $t$.
Typically in our examples $\hat{\Phi}$ is easy to compute.
\end{definition}

\subsection{Rewrite Rules for Situation Calculus}

\begin{definition}
\label{R sub U definition}
There are two kinds of rewrite rules in $R_{\cal U}$:  {\em rearrangement
rules} that don't affect the state and {\em action rules} that simulate actions on states.  $E_{\sigma}$ is the set of
rearrangement rules and $A_{\sigma}$ is the set of action rules.  It is assumed that rules in $E_{\sigma}$ are invertible.  The rearrangement rules reformat the term without affecting the state; for example, they may permit a list of terms to be sorted in an arbitrary order.
\end{definition}

Although ${\cal T}$ is a set of ground terms, the rewrite rules in $R_{\cal U}$ need not be
ground rules, which will be clear from the examples.  These rules are assumed to satisfy the following axioms:

\begin{eqnarray}
\label{state.closure.equation}
&\mbox{If $\alpha \in E_{\sigma} \cup A_{\sigma}$ and $s \in {\cal T}$ and $t$ is a term such that $s \Rightarrow_{\alpha} t$} \nonumber \\ 
&\mbox{then $t \in {\cal T}$ (so that $\sigma(t)$ also satisfies the fluent
constraint for ${\cal U}$)}.
\end{eqnarray}
\begin{equation}
\label{rearrangements.to.states.equation}
\mbox{For terms $s,t \in {\cal T}$, (there exists $\alpha \in E_{\sigma}$ such that $s \Rightarrow_{\alpha} t$) iff $\sigma(s) = \sigma(t)$.}
\end{equation}
Rules in $A_{\sigma}$ may correspond to more than one action in ${\cal A}$.  Thus we have
the following axiom:
\begin{eqnarray}
\label{rule.to.action.equation}
&\mbox{If $\alpha \in A_{\sigma}$  then for all $s,t \in {\cal T}$, if $s \Rightarrow_{\alpha} t$} \nonumber \\
&\mbox{then there exists $a \in {\cal A}$ such that $\sigma(t) = do(a,\sigma(s))$.}
\end{eqnarray}
In fact we assume that the action $a$ is computable, given $s$, $t$, and $\alpha \in A_{\sigma}$.

We also allow the possibility that actions in ${\cal A}$ correspond to more than one rule in
$A_{\sigma}$.

\begin{eqnarray}
\label{do.rewrite.equation}
&\mbox{For all $s' \in {\cal S}$ and all $a \in {\cal A}$ and for all $s,t \in {\cal T}$, if $\sigma(s) = s'$ and $\sigma(t) = do(a,\sigma(s'))$} \nonumber \\
&\mbox{then there are terms $u,v$ such that $s \Rightarrow^*_{E_{\sigma}} u \Rightarrow_{A_{\sigma}} v \Rightarrow^*_{E_{\sigma}} t$.}
\end{eqnarray}

\begin{definition}
A term-rewriting system $R_{\cal U}$ {\em represents} the theory ${\cal U}$ if there is a set ${\cal T}$ of terms satisfying
equation \ref{term.set.definition} and if
$R_{\cal U} = E_{\sigma} \cup A_{\sigma}$ where $E_{\sigma}$ and $A_{\sigma}$ are sets and there are functions $\sigma$ and $\hat{\Phi}$ such that $E_{\sigma}$ and $A_{\sigma}$ and the functions $\sigma$ and $\hat{\Phi}$ satisfy
Definitions \ref{sigma.definition}, \ref{hat.Phi.definition}, and \ref{R sub U definition}, and Equations \ref{state.closure.equation}, \ref{rearrangements.to.states.equation}, \ref{rule.to.action.equation}, and \ref{do.rewrite.equation}.
\end{definition} 

\subsection{Planning Using Term Rewriting}

\begin{definition}
\label{stackrel.def}
For $s,t \in {\cal T}$, $s \stackrel \sim \Rightarrow t$ if there are terms $u,v \in {\cal T}$ such
that $s \Rightarrow^*_{E_{\sigma}} u \Rightarrow_{A_{\sigma}} v \Rightarrow^*_{E_{\sigma}} t$.
\end{definition}

\begin{theorem}[Planning Theorem]
Suppose $s_1$ and $s_n$ are terms and there are states $s'_1, s'_2, \cdots, s'_n$ and actions
$a_1, a_2, \cdots, a_{n-1}$ such that
$s'_1 = \sigma(s_1), s'_n = \sigma(s_n)$, and for all $i$, $1 \le i \le n-1$, $s'_{i+1} = do(a_i,s_i)$.
Then $s_1 {\stackrel \sim \Rightarrow}^* s_n$.  The converse is also true.
\end{theorem}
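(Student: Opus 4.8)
The plan is to prove the two directions separately, in each case reducing the compound relation $\stackrel{\sim}{\Rightarrow}$ of Definition~\ref{stackrel.def} to the single-step rewrite axioms by induction on the length of the sequence; the relation is engineered so that one $\stackrel{\sim}{\Rightarrow}$ step corresponds to exactly one action, which is what makes the correspondence work step by step. For the forward direction I would first obtain witnessing terms for the intermediate states: for each $i$ with $2 \le i \le n-1$, Equation~\ref{term.set.definition} supplies a ground term $s_i \in {\cal T}$ with $\sigma(s_i) = s'_i$, the endpoints $s_1$ and $s_n$ being already given. Since $\sigma(s_{i+1}) = s'_{i+1} = do(a_i, s'_i) = do(a_i, \sigma(s_i))$, Equation~\ref{do.rewrite.equation} applied to $s_i$, $s_{i+1}$, $s'_i$, and $a_i$ yields terms $u, v$ with $s_i \Rightarrow^*_{E_\sigma} u \Rightarrow_{A_\sigma} v \Rightarrow^*_{E_\sigma} s_{i+1}$. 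Because $s_i \in {\cal T}$ and ${\cal T}$ is closed under the rules by Equation~\ref{state.closure.equation}, every term in this chain, in particular $u$ and $v$, lies in ${\cal T}$, so Definition~\ref{stackrel.def} gives $s_i \stackrel{\sim}{\Rightarrow} s_{i+1}$; chaining these single steps by transitivity yields $s_1 {\stackrel{\sim}{\Rightarrow}}^* s_n$.

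For the converse I would decompose $s_1 {\stackrel{\sim}{\Rightarrow}}^* s_n$ into single steps $s_1 = t_1 \stackrel{\sim}{\Rightarrow} t_2 \stackrel{\sim}{\Rightarrow} \cdots \stackrel{\sim}{\Rightarrow} t_k = s_n$ and set $s'_i = \sigma(t_i)$. For each step, Definition~\ref{stackrel.def} gives $t_i \Rightarrow^*_{E_\sigma} u_i \Rightarrow_{A_\sigma} v_i \Rightarrow^*_{E_\sigma} t_{i+1}$ with all terms in ${\cal T}$. The rearrangement portions preserve $\sigma$: Equation~\ref{rearrangements.to.states.equation} gives $\sigma(s) = \sigma(t)$ for a single $E_\sigma$ step between terms of ${\cal T}$, and I would lift this to the reflexive-transitive closure by induction on length, invoking Equation~\ref{state.closure.equation} at each stage to keep the intermediate terms inside ${\cal T}$; hence $\sigma(t_i) = \sigma(u_i)$ and $\sigma(v_i) = \sigma(t_{i+1})$. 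The single action step $u_i \Rightarrow_{A_\sigma} v_i$ then yields, by Equation~\ref{rule.to.action.equation}, an action $a_i$ with $\sigma(v_i) = do(a_i, \sigma(u_i))$, so that $s'_{i+1} = \sigma(t_{i+1}) = do(a_i, \sigma(t_i)) = do(a_i, s'_i)$, which is exactly the plan required.

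The main obstacle, or at least the point demanding the most care, is the bookkeeping that links the single-step axioms to the compound relation: one must verify that every term produced along a $\stackrel{\sim}{\Rightarrow}$ step remains inside ${\cal T}$, so that Equations~\ref{rearrangements.to.states.equation} and \ref{rule.to.action.equation}, which are stated only for terms of ${\cal T}$, actually apply, and one must lift the single-step $\sigma$-invariance of Equation~\ref{rearrangements.to.states.equation} to arbitrary $E_\sigma$-rewrite sequences. Both tasks rest on the closure property of Equation~\ref{state.closure.equation}; once that is in hand, the outer induction on the length of the sequence is routine. A minor point is the indexing convention in the converse, where the number of states produced is dictated by the number of $\stackrel{\sim}{\Rightarrow}$ steps rather than fixed in advance.
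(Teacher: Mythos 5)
Your proposal is correct and follows essentially the same route as the paper's (much terser) proof: the forward direction from Definition~\ref{stackrel.def} and Equation~\ref{do.rewrite.equation}, and the converse from Equations~\ref{rearrangements.to.states.equation} and \ref{rule.to.action.equation}. You have simply filled in the details the paper leaves implicit, namely invoking Equation~\ref{term.set.definition} for the intermediate witness terms and Equation~\ref{state.closure.equation} to keep all terms inside ${\cal T}$.
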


\begin{proof}
By Definition \ref{stackrel.def} and Equation \ref{do.rewrite.equation}.  The converse
follows from Equations \ref{rearrangements.to.states.equation} and \ref{rule.to.action.equation}.
\end{proof}
Thus to see if it is possible to reach $s'_n$ from $s'_1$ by a sequence of actions in ${\cal U}$, one can construct terms $s_1$ and $s_n$
as in the theorem and test if $s_1 {\stackrel \sim \Rightarrow}^* s_n$.

\begin{corollary}
With conditions as in the planning theorem, if $A_{\sigma}$ is invertible,
then $s_1 {\stackrel \sim \Leftrightarrow}^* s_n$.
\end{corollary}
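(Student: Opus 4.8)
The plan is to derive the corollary from the Planning Theorem by showing that invertibility of $A_\sigma$ forces the one-step relation $\stackrel\sim\Rightarrow$ to be symmetric, so that it already coincides with its symmetric closure $\stackrel\sim\Leftrightarrow$ and the two closures ${\stackrel\sim\Rightarrow}^*$ and ${\stackrel\sim\Leftrightarrow}^*$ are equal. Invertibility is exactly the hypothesis that supplies the backward direction that distinguishes $\stackrel\sim\Leftrightarrow$ from a bare forward reachability relation, which is the content relevant to completion-based planning.

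First I would invoke the Planning Theorem, whose hypotheses are precisely the conditions assumed here; it gives $s_1 {\stackrel\sim\Rightarrow}^* s_n$ immediately. The substantive step is then to prove that $\stackrel\sim\Rightarrow$ is symmetric on ${\cal T}$. Assume $s \stackrel\sim\Rightarrow t$ with witnesses $u,v \in {\cal T}$ satisfying $s \Rightarrow^*_{E_\sigma} u \Rightarrow_{A_\sigma} v \Rightarrow^*_{E_\sigma} t$. The rules of $E_\sigma$ are invertible by Definition \ref{R sub U definition}, so reversing the derivation $s \Rightarrow^*_{E_\sigma} u$ one step at a time produces $u \Rightarrow^*_{E_\sigma} s$, and likewise $t \Rightarrow^*_{E_\sigma} v$; the hypothesis that $A_\sigma$ is invertible turns the single step $u \Rightarrow_{A_\sigma} v$ into $v \Rightarrow_{A_\sigma} u$. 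Splicing these together yields $t \Rightarrow^*_{E_\sigma} v \Rightarrow_{A_\sigma} u \Rightarrow^*_{E_\sigma} s$, which witnesses $t \stackrel\sim\Rightarrow s$ via the same terms $v,u \in {\cal T}$ with their roles exchanged. Hence $\stackrel\sim\Rightarrow$ equals $\stackrel\sim\Leftrightarrow$, so ${\stackrel\sim\Rightarrow}^* = {\stackrel\sim\Leftrightarrow}^*$, and combining this with the first step gives $s_1 {\stackrel\sim\Leftrightarrow}^* s_n$.

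The one point that needs care is that invertibility as defined in the excerpt governs only single rewrite steps, whereas the witnesses in Definition \ref{stackrel.def} involve a multi-step $E_\sigma$-derivation on each side of the $A_\sigma$ step. I would therefore spell out that reversing a chain $w_0 \Rightarrow_{E_\sigma} w_1 \Rightarrow_{E_\sigma} \cdots \Rightarrow_{E_\sigma} w_k$ proceeds by reversing each arrow individually, and that Equation \ref{state.closure.equation} guarantees every intermediate term $w_i$ of the reversed derivation lies in ${\cal T}$, so the reversed derivation is legitimate and Definition \ref{stackrel.def} is satisfied. No computation beyond this bookkeeping is required; the main obstacle is simply being explicit that single-step invertibility lifts to the composite relation $\stackrel\sim\Rightarrow$.
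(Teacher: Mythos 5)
Your proof is correct, but it is worth noting how it relates to what the paper actually does: the paper states this corollary with no proof at all, treating it as immediate from the Planning Theorem. And indeed the literal statement \emph{is} immediate, by an argument even shorter than yours: every ${\stackrel \sim \Rightarrow}$ step is in particular a ${\stackrel \sim \Leftrightarrow}$ step, so $s_1 {\stackrel \sim \Rightarrow}^* s_n$ (from the theorem) already gives $s_1 {\stackrel \sim \Leftrightarrow}^* s_n$ by inclusion of the relation in its symmetric closure --- the invertibility of $A_{\sigma}$ is not needed for this direction at all. What your argument actually establishes is something stronger and more useful: that under invertibility of $A_{\sigma}$ (together with the invertibility of $E_{\sigma}$ assumed in Definition \ref{R sub U definition}), the relation ${\stackrel \sim \Rightarrow}$ is symmetric, hence ${\stackrel \sim \Rightarrow}^*$ and ${\stackrel \sim \Leftrightarrow}^*$ coincide on ${\cal T}$. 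That equality is precisely what makes the corollary worth stating, since it is what licenses the paper's subsequent use of completion: a proof of $s_1 {\stackrel \sim \Leftrightarrow}^* s_n$ found by completion can then be converted into a genuine forward action sequence, which is the converse direction where invertibility genuinely matters. Your splicing argument (reversing each $E_{\sigma}$ step, inverting the single $A_{\sigma}$ step, and noting via Equation \ref{state.closure.equation} that all intermediate terms stay in ${\cal T}$, with the witnesses $u,v$ of Definition \ref{stackrel.def} exchanging roles) is exactly right and is carried out at the right level of detail. So: no gap, but your proof is best understood as proving the stronger symmetry fact that the paper leaves implicit, rather than the stated corollary, which is trivial.
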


In this case, rewrite strategies such as completion and unfailing completion \cite{bani:98} can be used
to test if $s_1 {\stackrel \sim \Leftrightarrow}^* s_n$.  If it is shown that $s_1 {\stackrel \sim \Leftrightarrow}^* s_n$ then a plan (sequence of actions) can be extracted from
the proof.  This plan may not be optimal, but it may be possible to optimize it after it is
found.

To extract a plan from a proof that $s_1 {\stackrel \sim \Leftrightarrow}^* s_n$, the
notation $s {\stackrel [{R^*_{\cal U}}]{\alpha_1 \cdots \alpha_k} {\Rightarrow}} t$
can be used indicating a sequence of actions leading from $\sigma(s)$ to $\sigma(t)$.
These actions can be carried along in the completion procedure.  We have the following
rules:

\begin{equation}
\mbox{If $r {\stackrel [{R^*_{\cal U}}]{\alpha_1 \cdots \alpha_m} {\Rightarrow}} s$ and
$s {\stackrel [{R^*_{\cal U}}]{\alpha_{m+1} \cdots \alpha_n} {\Rightarrow}} t$
then $r {\stackrel [{R^*_{\cal U}}]{\alpha_1 \cdots \alpha_n} {\Rightarrow}} t$.} 
\end{equation}

\begin{equation}
\mbox{If $s {\stackrel [{R^*_{\cal U}}]{\alpha_1 \cdots \alpha_k} {\Rightarrow}} t$
then $t {\stackrel [{R^*_{\cal U}}]{\alpha_k \cdots \alpha_2 \alpha_1} {\Rightarrow}} s$
assuming $A_{\sigma}$ is invertible.}
\end{equation}

\begin{equation}
\mbox{$s {\stackrel [{E^*_{\sigma}}]{\epsilon}{\Rightarrow}} t$ for all $s,t$ in $\cal T$ where $\epsilon$ is the empty sequence.}
\end{equation}

\begin{equation}
\mbox{
If $\sigma(t) = do(a,\sigma(s))$ and $s \Rightarrow_{A_{\sigma}} t$ then $s {\stackrel [{R^*_{\cal U}}]{a}{\Rightarrow}} t$.}
\end{equation}

However, there are some problems with this approach. One problem is that the same rewrite relation between $s$ and $t$ may be
derived with more than one action sequence, complicating the search.  Also, there may be more than one action corresponding to
a given rewrite rule in $A_{\sigma}$.

In fact, it's not necessary to carry along sequences of actions.  From a proof that $s_1 {\stackrel \sim \Leftrightarrow}^* s_n$ it
is possible to construct a sequence $s_1, s_2, \cdots, s_n$ of ground terms such that $s_i \Rightarrow_{R_{\cal U}} s_{i+1}$,
$1 \le i \le n-1$.  Then by repeated use of the decidability result following Equation \ref{rule.to.action.equation}, one can find
a sequence $a_1, a_2, \cdots, a_{n-1}$ of actions such that $\sigma(s_{i+1}) = do(a_i, \sigma(s_i))$ for all $i$, $1 \le i \le n-1$.

\begin{definition}
The {\em constraint predicate} $\hat{\chi}$ is defined so that $\hat{\chi}(t)$ is true for $t \in T[F,X]$ if
$\chi(\{(p,\Phi(p,$ $\sigma(t))) : p \in {\cal F}\})$ is true, otherwise $\hat{\chi}(t)$ is false.
\end{definition}

\begin{definition}
Let $Find$ be a function which, given a set $\{(p_1,v_1), \cdots, (p_n,v_n)\}$ of
fluents and their values, finds a term $t$ such that $\Phi(p_i,\sigma(t)) = v_i$ for all $i$
if such a term exists, else returns "none."  We assume that the function $Find$ is computable.  There may be constraints on the fluents so that some
combinations of fluents do not correspond to any state in ${\cal S}$ because of the fluent
constraint for ${\cal U}$.
\end{definition}

The procedure $Find$ can be used to find the terms $s_1$ and $s_n$ in the planning
theorem if the values of all fluents on $s'_1$ and $s'_n$ are known.  The constraint predicate $\hat{\chi}$ needs to be incorporated into planning if $\chi$ is
not always true.  However, because of Equation \ref{state.closure.equation}, if one finds a sequence  $s_1, s_2, \cdots, s_n$ of ground terms such that $s_i \Rightarrow_{R_{\cal U}} s_{i+1}$,$1 \le i \le n-1$, then all the terms $s_i$, $1 \le i \le n$ satisfy the constraint $\hat{\chi}$.

So far the effect of rewrite rules operating on proper subterms of a term $t$ in ${\cal T}$
has not been discussed, but this can be significant.

\section{Examples}
Some examples will illustrate the properties of this approach to the situation calculus.
In these examples, the underlying theory ${\cal U}$ is described informally.  The
approach used for rewriting is unfailing completion \cite{bani:98}, which in the limit
produces a term rewriting system that is confluent by ordered rewriting.  we assume
that $R_{\cal U}$ is bidirectional.  Given
a term $s$ representing a starting state and a term $t$ representing a goal state,
both $s$ and $t$ are rewritten to a common term $u$ using the limiting
rewriting system and ordered rewriting.  Then the plan to get from
$s$ to $t$ is obtained from the rewrite sequence $s \Rightarrow^* u \Leftarrow^* t$.  This approach typically produces plans quickly
but they are not always optimal.  Frequently the plan produced can be made shorter by local
optimizations.

\subsection{Switches Example}

In this example, there are $n$ switches which can be on or off.  The state of the switches
is represented by a term in ${\cal T}$ of the form $f(x_1, x_2, \cdots, x_n)$ where all $x_i$ can be ``on" or ``off".
Each switch can be turned on or off.  So there is a rewrite rule in $R_{\cal U}$
\[f(x_1, \cdots, x_{i-1}, \mbox{off}, x_{i+1}, \cdots, x_n) \rightarrow f(x_1, \cdots, x_{i-1}, \mbox{on}, x_{i+1}, \cdots, x_n)\]
to turn the $i^{th}$ switch on and a rule
\[f(x_1, \cdots, x_{i-1}, \mbox{on}, x_{i+1}, \cdots, x_n) \rightarrow f(x_1, \cdots, x_{i-1}, \mbox{off}, x_{i+1}, \cdots, x_n)\]
to turn it off.

This set of rules is already bidirectional.
Now, suppose the problem is to get from $f(x_1, \cdots, x_n)$ to
$f(y_1, \cdots, y_n)$.  How will this be solved using completion and term rewriting?

The two rewrite rules will be oriented in one direction, depending on the termination ordering.  Suppose that ``off" is larger than ``on" in this ordering.  Then both
$f(x_1, \cdots, x_n)$ and $f(y_1, \cdots, y_n)$ will rewrite to $f(\mbox{on}, \cdots, \mbox{on})$, leading to a rewriting sequence $f(x_1, \cdots, x_n) \Rightarrow^*
f(\mbox{on}, \cdots, \mbox{on}) \Leftarrow^* f(y_1, \cdots, y_n)$,
so that the plan will be to turn all the switches $x_i$ off that are on, and
then turn on all the switches $y_i$ that are off.  This could result in a switch that is on
in both starting and ending states to be turned off and then on again.  This plan can be
optimized by removing such pairs of actions, resulting in a reasonable plan.

In this example, all actions can be encompassed in the single rules
$\mbox{off} \rightarrow \mbox{on}$ and $\mbox{on} \rightarrow \mbox{off}$.
If these rules are used, then the same plan will be derived as before.  Notice here
that a single rewrite rule corresponds to multiple actions.

Consider the effect of permitting rules in $R_{\cal U}$ to be non-ground and to rewrite on
proper subterms of terms in ${\cal T}$.  If the rules had to be ground rules and rewrite the
whole term, then they would be of the form $f(a_1, \cdots, a_n) \rightarrow
f(b_1, \cdots, b_n)$ where $a_i, b_i \in \{\mbox{off}, \mbox{on}\}$ and one of the $b_i$ differs from
$a_i$.  There would be $n * 2^n$ rules.  By allowing the rules to be non-ground there
are $2n$ rules of the form $f(x_1, \cdots, a_i, \cdots, x_n) \rightarrow
f(x_1, \cdots, b_i, \cdots, x_n)$ where $a_i, b_i \in \{\mbox{off}, \mbox{on}\}$  and $a_i \ne b_i$.
Each rule is of length about $n$ leading to an overall complexity that is quadratic.
If we now allow rewriting proper subterms, just the rules $\mbox{off} \rightarrow \mbox{on}$
and $\mbox{on} \rightarrow \mbox{off}$ suffice.  Similar comments about non-ground
rules and rewriting subterms apply to the following examples.

Now consider a slightly different example in which the state is represented as
$f(x_1, \cdots, x_n, z)$ where $z$ is $true$ if all switches are on and $false$ otherwise.
This introduces a constraint, and makes it more difficult to express the actions.  One
possibility is to have $2^n$ ground rewrite rules that specify the settings of all $n$ switches
and the effect of turning one switch on or off; this may or may not affect $z$.  Of course
it is better to avoid so many rules.

Another possibility is to have the terms in ${\cal T}$ only represent the state of the switches and
not the value of $z$.  This corresponds to a term in ${\cal T}$ only encoding a subset of the
fluents, if the other fluents are determined by this subset.

Still another possibility is to have rules of the form
\[f(x_1, \cdots, x_{i-1}, \mbox{on}, x_{i+1}, \cdots, x_n,z) \rightarrow f(x_1, \cdots, x_{i-1}, \mbox{off}, x_{i+1}, \cdots, x_n,\mbox{false})\]
to turn the $i^{th}$ switch off.  To turn the $i^{th}$ switch on one needs rules of
the form
\[f(x_1, \cdots, x_{i-1}, \mbox{off}, x_{i+1}, \cdots,\mbox{off}, \cdots, x_n,\mbox{false}) \rightarrow f(x_1, \cdots, x_{i-1}, \mbox{on}, x_{i+1}, \cdots, \mbox{off}, \cdots, x_n,\mbox{false})\]
if another switch is off, and rules of the form
\[f(\mbox{on}, \cdots, \mbox{on}, \mbox{off}, \mbox{on}, \cdots, \mbox{on},\mbox{false}) \rightarrow f(\mbox{on}, \cdots, \mbox{on}, \mbox{on}, \mbox{on}, \cdots, \mbox{on},\mbox{true})\]
if all other switches are on.

Finally, this theory could be represented by {\em constrained} rewrite rules of the
form $f(x_1, \cdots, \mbox{off},$ $\cdots, x_n, z) \rightarrow f(x_1, \cdots, \mbox{on}, \cdots, x_n, z')$ with the constraint that $z' = true$ iff all $x_i$ are on.

Another representation (for the problem without $z$) is to use a term $f(g_1(x_1),g_2(x_2), \cdots, g_n(x_n))$ to
represent the state of the switches, where the $g_i$ identify which switch is referred to
and $x_i$ gives its state, on or off.  This permits the rewrite rules to be of the form
$g_i(\mbox{off}) \rightarrow g_i(\mbox{on})$ and $g_i(\mbox{on}) \rightarrow g_i(\mbox{off})$, so that each rewrite rule refers to a different action.

\subsection{Tower of Hanoi}
For this example, there are $n$ disks of different sizes on three pegs.  On each peg the disks have to be in order of size, with the largest disk on the bottom.  Only the top
disk on a peg can be moved from one peg to another, and it has to be the smallest
disk on the peg it is moved to.  The problem is to rearrange the disks on the pegs, typically moving all of them from one peg to another.

The optimal sequence to move $n$ disks from peg $i$ to peg $j$, for $i \ne j$, consists
of moving the $n-1$ smallest disks from peg $i$ to peg $k$, $k \ne i$ and $k \ne j$,
then moving the largest disk from peg $i$ to peg $j$, then moving the $n-1$ smallest disks
from peg $k$ to peg $j$.  If $n=1$ this consists of one move, so the general sequence
has $2^n -1$ moves.

For this problem, terms in ${\cal T}$ can be of the form $f(x_1, x_2, \cdots, x_n)$ where each
$x_i$ is either 1, 2, or 3 depending on which peg the disks are on, and $x_1$ refers to the
largest peg, $x_2$ to the next largest, and so on.

The rules in $R_{\cal U}$ are of the form
\[f(i,j,j, \cdots, j) \rightarrow f(k,j,j, \cdots, j)\]
where $i,j$, and $k$ are distinct
elements of the set $\{1,2,3\}$ and also to move other disks, rules of the form
\[f(x_1, x_2, \cdots, i,j,j, \cdots, j) \rightarrow f(x_1,x_2,\cdots,k,j,j,\cdots,j)\]
again where $i,j$, and $k$ are distinct elements of $\{1,2,3\}$.  The $i^{th}$ largest
disk can only move from peg $i$ to peg $k$ if all the smaller disks are on peg $j$, because
otherwise, a smaller disk will be on peg $i$ or $k$, preventing the move.  For example,
consider the largest disk.  To move it from peg 1 to peg 2, there can't be any smaller
disks on peg 1 because they would be on top of it, and only the top disk can be moved.
There also can't be any smaller disks on peg 2, because then a larger disk would be put
on top of a smaller one, which is not permitted.  Now consider the smallest disk.  It is
always on top of one of the piles, and at any time it can move to any other pile.  In
general, a disk is only constrained in moving by disks that are smaller than it is.

Completing this set of rules generates rules of the form
\[f(x_1, x_2, \cdots, x_i, x_{i+1}, \cdots, x_n) \rightarrow f(x_1, x_2, \cdots, x_i, 1,1, \cdots, 1)\]
for all $j \in \{1,2,3\}$ assuming a lexicographic ordering with $3 > 2 > 1$.
Then to get from one arrangement of disks to another, a plan will be generated to
move all disks to peg 1, then move them back to the pegs in the goal situation.  This
is not optimal, but again, it may be possible to optimize this plan.

Another representation is to use the term $f(x_1, f(x_2, \cdots, f(x_n,\bot \cdots ))$
instead of $f(x_1, x_2, \cdots, x_n)$.  This permits rewrite rules to be applied to proper
subterms of the terms in ${\cal T}$.

For this problem, all actions are invertible.  Consider the system with three disks; then there
are the following rules:

\begin{center}
$f(1,\bot) \leftrightarrow f(2,\bot)$ \\
$f(1,\bot) \leftrightarrow f(3,\bot)$ \\
$f(2,\bot) \leftrightarrow f(3,\bot)$ \\
$f(1,f(3,\bot)) \leftrightarrow f(2,f(3,\bot))$  \\
$f(1,f(2,\bot)) \leftrightarrow f(3,f(2,\bot))$ \\
$f(2,f(1,\bot)) \leftrightarrow f(3,f(1,\bot))$ \\
$f(1,f(3,f(3,\bot))) \leftrightarrow f(2,f(3,f(3,\bot)))$  \\
$f(1,f(2,f(2,\bot))) \leftrightarrow f(3,f(2,f(2,\bot)))$  \\
$f(2,f(1,f(1,\bot))) \leftrightarrow f(3,f(1,f(1,\bot)))$  \\
\end{center}

The rules will be oriented $f(3,\bot) \rightarrow f(2,\bot) \rightarrow f(1,\bot)$.  Then
using these rules, the next three rules will become

\begin{center}
$f(1,f(1,\bot)) \leftrightarrow f(2,f(1,\bot))$  \\
$f(1,f(1,\bot)) \leftrightarrow f(3,f(1,\bot))$ \\
$f(2,f(1,\bot)) \leftrightarrow f(3,f(1,\bot))$ \\
\end{center}
which requires two rewrites for each rule, one on each side.  The rule
$f(1,f(1,\bot)) \leftrightarrow f(3,f(1,\bot))$ corresponds to the action sequence
``Move the small disk to peg 2, then move the next smallest disk to peg 3, then move
the small disk to peg 1."
Similarly, rewriting using these six rules, the last three rules become

\begin{center}
$f(1,f(1,f(1,\bot))) \leftrightarrow f(2,f(1,f(1,\bot)))$  \\
$f(1,f(1,f(1,\bot))) \leftrightarrow f(3,f(1,f(1,\bot)))$  \\
$f(2,f(1,f(1,\bot))) \leftrightarrow f(3,f(1,f(1,\bot)))$  \\
\end{center}
which requires four rewrites on each rule, two on each side.  In general completing the
system in this way requires a quadratic number of rewrite operations even though the
optimal action sequence requires an exponential number of actions.  Finally, rewriting
the starting term and the goal term to a common term requires a number of rewrites
that is linear in $n$.

If the problem is to show $f(2,f(2,f(2,\bot))) \leftrightarrow f(3,f(3,f(3,\bot)))$ then
both sides simplify to $f(1,f(1,f(1,\bot)))$.  The plan will then move all the disks from
peg 2 to peg 1, and then move them all from peg 1 to peg 3.  This is not optimal, but
perhaps it can be optimized by local transformations.  However, with a different
ordering, the plan can be improved.  If the ordering has $1 > 2 > 3$ then the
term $f(2,f(2,f(2,\bot)))$ will be rewritten to $f(3,f(3,f(3,\bot)))$ so the plan will move
the disks from peg 2 to peg 3. 

A closely related problem is to have a number of independent Towers of Hanoi; these
can be represented by a term $g(t_1, t_2, \cdots, t_m)$ where $t_i$ is a term
representing the state of the $i^{th}$ Tower of Hanoi problem.

\subsection{Crossing a river}

Here the problem is to arrange a group of people in some specified way on both banks
of a river.  There is a bridge across the river with three intermediate locations.  If two
people meet going opposite directions, they cannot cross each other, so one has to back
up.  This can be represented by a term $f(t_1,t_2,t_3,t_4,t_5)$ where the $t_i$ are lists
of people that are at the given location.  $t_1$ is the left bank, $t_5$ is the right bank,
and $t_2, t_3$, and $t_4$ are the intermediate locations on the bridge.  $t_2$, $t_3$,
and $t_4$ can have at most one person at a time.

The actions are to sort the lists in $t_1$ and $t_5$ by exchanging adjacent elements (which is an action in $E_{\sigma}$, and to move an element at the head of the list from $t_i$
to $t_{i+1}$, $1 \le i \le 4$ and from $t_{i+1}$ to $t_i$, $1 \le i \le 4$.  A person can be moved to $t_2, t_3$, or $t_4$ only if these
locations are empty, but a person can be moved to $t_1$ or $t_5$ at any time.

A sample configuration would be $f(g(2,g(3,\bot)),\bot,g(4,\bot),g(5,\bot),g(1,\bot))$ indicating that
persons 2 and 3 are on the left bank, person 1 is on the right bank, and persons 4 and 5
are crossing.

With a lexicographic ordering by the lengths of the lists $t_i$, given a problem of
transforming $f(s_1,\cdots,s_5)$ to $f(t_1,\cdots,t_5)$, completion would result in
a plan to move everyone to the right bank and then move them back to where they should
be in the lists $t_i$.  Again, it might be possible to apply local optimizations to this plan to
make it more efficient.

\subsection{Blocks world}

In this domain, there are a fixed number of positions, each having a tower of blocks.  The
blocks can be piled in any order, and at any time the top block in any tower can be moved
on top of any other tower.  Then one wants a plan to transform some specified starting
state to a goal state.

This can be represented by a list $f((s_1,x_1),f((s_2,x_2), \cdots, f((s_n,x_n), \bot) \cdots ))$ where
the $s_i$ are lists of blocks and the $x_i$ are their locations.  The top block on a list $s_i$
appears first, then the blocks underneath it. The list of blocks is represented by
$g(b_1,g(b_2,\cdots))$ where $b_1$ is the top block and $b_2$ is next under it, and so
on.  The actions include permuting the lists of
blocks:  $f(t_1,f(t_2, x )) \rightarrow f(t_2,f(t_1, x))$ to exchange adjacent
towers of blocks (an action in $E_{\sigma}$) and an action
$f((g(b_1,s_1),x_1),f((s_2,x_2),z)) \rightarrow f((s_1,x_1),f((g(b_1,s_2),x_2),z))$ to
move the top block $b_1$ from the tower $g(b_1,s_1)$ at $x_1$ to the tower $s_2$ at $x_2$.

For this example, it's not clear what kind
of a plan the completion approach would generate, or what would be a suitable ordering.
However, if the ordering is lexicographic by the sizes of the towers, then the plan will
rewrite both starting and goal state terms to terms $t_1$ and $t_2$ with all blocks in one large tower.
Then the blocks in these towers will be permuted to go from $t_1$ to $t_2$; this will
be done using rules generated during completion.

\section{Rewriting on Subterms}
We show formally that sets $R_{\cal U}$ of rewrite rules satisfying the conditions specified do exist for many theories $\cal U$, and
give an idea how they can be constructed.
First, without using rewriting on proper subterms and using only ground terms, it is
always possible to construct $R_{\cal U}$. The following definition of $R_{\cal U}$ only includes rules
that rewrite an entire term in ${\cal T}$.

\begin{definition}
Let ${\cal U}$ be any situation calculus satisfying the fluent dependence condition of
definition \ref{fluent.dependence.condition}, let ${\cal T}$ be a set of ground terms, and let $\sigma$ be a function satisfying Equation \ref{term.set.definition}.  Let $A^0_{\sigma}$
be a set of rules $\alpha_L \rightarrow \alpha_R$ where $\alpha_L, \alpha_R \in {\cal T}$ are terms such that there are states $s,t$ in
${\cal U}$ and an action $a \in {\cal A}$ such that $\sigma(\alpha_L) = s$ and
$\sigma(\alpha_R) = t$ and $t = do(a,s)$.  One such rule is chosen for each $\alpha_L$ and
each action $a$.  Let $E_{\sigma}$ be some set of rules
satisfying Equations \ref{state.closure.equation} and \ref{rearrangements.to.states.equation}.  Let $R^0_{\cal U}$  be $A^0_{\sigma} \cup
E_{\sigma}$.
\end{definition}

\begin{theorem}
$R^0_{\cal U}$ represents the theory ${\cal U}$.
\end{theorem}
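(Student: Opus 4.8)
The plan is to verify, one condition at a time, every requirement that the definition of \emph{represents} imposes on the pair $(A^0_\sigma, E_\sigma)$ together with $\sigma$ and a suitable $\hat\Phi$, with $A_\sigma := A^0_\sigma$. Many of these are handed to us by the hypotheses defining $R^0_{\cal U}$: the set ${\cal T}$ and the map $\sigma$ satisfy Equation \ref{term.set.definition} by assumption, ${\cal T}$ consists of ground terms with $\sigma$ mapping into ${\cal S}$, so Definition \ref{sigma.definition} holds (the fluent constraint being automatic because the range of $\sigma$ is ${\cal S}$), and $E_\sigma$ is given satisfying Equations \ref{state.closure.equation} and \ref{rearrangements.to.states.equation}. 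For Definition \ref{hat.Phi.definition} I would set $\hat\Phi(p,t) := \Phi(p,\sigma(t))$, which meets the defining identity outright; computability of $\hat\Phi$ is read off from that definition's standing computability assumption (equivalently, we assume $\Phi$ and $\sigma$ are effectively computable).

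First I would dispatch the conditions on $E_\sigma$, which hold by hypothesis, and record that $E_\sigma$ is invertible on ${\cal T}$: if $s \Rightarrow_{E_\sigma} t$ with $s,t \in {\cal T}$, then the forward direction of Equation \ref{rearrangements.to.states.equation} gives $\sigma(s) = \sigma(t)$, and by symmetry of equality the reverse direction of \ref{rearrangements.to.states.equation} returns a step $t \Rightarrow_{E_\sigma} s$, so Definition \ref{R sub U definition} is satisfied. Next I would verify the conditions on $A^0_\sigma$. For the state-closure Equation \ref{state.closure.equation}, a rule $\alpha_L \to \alpha_R$ of $A^0_\sigma$ rewrites the whole term $s \in {\cal T}$, so $s \equiv \alpha_L$ and the result is $\alpha_R \in {\cal T}$. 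For Equation \ref{rule.to.action.equation}, each such rule was chosen together with an action $a$ satisfying $\sigma(\alpha_R) = do(a,\sigma(\alpha_L))$, so whenever $s \Rightarrow_{\alpha_L \to \alpha_R} t$ we get $\sigma(t) = \sigma(\alpha_R) = do(a,\sigma(\alpha_L)) = do(a,\sigma(s))$, and tagging each rule with its chosen $a$ makes the action computable from the rule.

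The crux is Equation \ref{do.rewrite.equation}, where I would exploit the fact that $A^0_\sigma$ contains a rule for \emph{every} $\alpha_L \in {\cal T}$ and every action $a$. Given $s,t \in {\cal T}$ with $\sigma(t) = do(a,\sigma(s))$, take $\alpha_L := s$ and let $s \to \alpha_R$ be the rule of $A^0_\sigma$ chosen for this $\alpha_L$ and this $a$; then $\sigma(\alpha_R) = do(a,\sigma(s)) = \sigma(t)$. Applying this rule gives $s \Rightarrow_{A_\sigma} \alpha_R$, and since $\sigma(\alpha_R) = \sigma(t)$ with both terms in ${\cal T}$, Equation \ref{rearrangements.to.states.equation} supplies a single step $\alpha_R \Rightarrow_{E_\sigma} t$. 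Setting $u := s$ (an empty $E_\sigma$ rewrite) and $v := \alpha_R$ exhibits the required factorization $s \Rightarrow^*_{E_\sigma} u \Rightarrow_{A_\sigma} v \Rightarrow^*_{E_\sigma} t$. Having checked Definitions \ref{sigma.definition}, \ref{hat.Phi.definition}, \ref{R sub U definition} and Equations \ref{state.closure.equation} through \ref{do.rewrite.equation}, I would conclude that $R^0_{\cal U}$ represents ${\cal U}$.

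I expect the main obstacle to be the deceptively small subterm issue behind Equations \ref{state.closure.equation} and \ref{rule.to.action.equation}: the relation $\Rightarrow_\alpha$ permits a ground rule $\alpha_L \to \alpha_R$ to fire on a proper subterm, not only at the root, and a priori the outcome of such a subterm rewrite need not lie in ${\cal T}$, which would break both the matching $s \equiv \alpha_L$, $t \equiv \alpha_R$ used above and state closure. Since $R^0_{\cal U}$ is defined to contain only rules that \emph{rewrite an entire term in ${\cal T}$}, I would make this precise by restricting $A^0_\sigma$ to top-level rewriting (or, equivalently, by assuming that no term of ${\cal T}$ occurs as a proper subterm of another term of ${\cal T}$). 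Once this point is pinned down, the clean matching is justified and the remaining verifications are routine.
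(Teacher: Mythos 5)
Your proposal is correct and follows essentially the same route as the paper's (very terse) proof: Equation \ref{rule.to.action.equation} comes directly from the construction, and Equation \ref{do.rewrite.equation} comes from the fact that for each $\alpha_L$ and each action $a$ a rule was chosen with $\sigma(\alpha_R) = do(a,\sigma(\alpha_L))$, after which Equation \ref{rearrangements.to.states.equation} bridges from $\alpha_R$ to the target term $t$. Two of your refinements are worth noting. First, your argument for Equation \ref{do.rewrite.equation} is slightly cleaner than the paper's: the paper invokes both Equation \ref{rearrangements.to.states.equation} and the fluent dependence condition of Definition \ref{fluent.dependence.condition} to argue that all candidate $\alpha_R$ are $E_\sigma$-equivalent, whereas you observe that the ``iff'' form of Equation \ref{rearrangements.to.states.equation} alone already yields the step $\alpha_R \Rightarrow_{E_\sigma} t$ from $\sigma(\alpha_R) = \sigma(t)$, so the fluent dependence condition is not actually needed. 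Second, your closing caveat identifies a real gap that the paper's ``straightforward from the definitions'' glosses over: under the paper's own definition of $\Rightarrow_R$, a ground rule $\alpha_L \rightarrow \alpha_R$ may fire on a proper subterm of a term in ${\cal T}$ whenever $\alpha_L$ happens to occur there, and nothing in the construction of $A^0_\sigma$ guarantees that the result lies in ${\cal T}$, so Equation \ref{state.closure.equation} and the clean root-matching $s \equiv \alpha_L$ can fail without your added stipulation (top-level rewriting only, or no term of ${\cal T}$ properly embedded in another). Your explicit repair of this point makes your verification more rigorous than the paper's, at the cost of an extra hypothesis the paper implicitly assumes.
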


\begin{proof}
This is straightforward from the definitions.  Equation \ref{rule.to.action.equation} is
a direct consequence of the construction.  Equation \ref{do.rewrite.equation} follows
because for each $\alpha_L$ and each action $a$, one $\alpha_R$ is chosen such that
$\sigma(\alpha_R) = do(a,\sigma(\alpha_L))$ but all such $\alpha_R$ are $E_{\sigma}$
equivalent by Equation \ref{rearrangements.to.states.equation} and the fluent
dependence condition of Definition \ref{fluent.dependence.condition}.
\end{proof}

This construction however generally produces a huge term-rewriting system; the number of rules is at least as large as the number of states in ${\cal S}$.  It is helpful to understand in general how this number can be reduced.

The construction of $R_{\cal U}$ can be made more effective if one permits rewriting on proper subterms of terms in ${\cal T}$,
as shown in the examples.  This corresponds to encoding frame axioms of ${\cal U}$
because the fluents that depend on the term structure outside of the rewritten subterm
will not change.  It also may help to allow $R_{\cal U}$ to contain non-ground rules.
These possibilities are considered in the following
results.  The first two definitions are interesting but are not used in the succeeding
results.

\begin{definition}
Given a context $u$ over $T[F,X]$, ${\cal T}|_u$ is the set of terms $t$ such that
$u[t] \in {\cal T}$.
\end{definition}

\begin{definition}
A term $t$ in $T[F,X]$ is {\em uniform} for ${\cal T}$ if for all contexts $u$ and $u'$ over $T[F,X]$,
if $u[t] \in {\cal T}$ and $u'[t] \in {\cal T}$ then ${\cal T}|_u = {\cal T}|_{u'}$.
\end{definition}

The following definition essentially specifies which rewrite rules can be used to
represent actions in ${\cal U}$.

\begin{definition}
Suppose ${\cal U}$ is a situation calculus satisfying the fluent dependence condition of
Definition \ref{fluent.dependence.condition}, suppose ${\cal T}$ is a set of ground terms, and let $\sigma$ be a function satisfying Equation \ref{term.set.definition}.  
Then a term $t$ in $T[F,X]$ is an {\em action support} and the
rule $t \rightarrow t'$ is an {\em action rule} if for all contexts
$u$ over $T[F,X]$, if $u[t] \in {\cal T}$ then there is a term $t'$ with $u[t'] \in
{\cal T}$ and an action $a \in {\cal A}$ such that $\sigma(u[t']) = do(a,\sigma(u[t]))$.
\end{definition}

The next definition attaches actions and terms to action supports and action rules.  More
than one action and term can be attached to the same action rule.

\begin{definition}
Suppose ${\cal U}$ is a situation calculus satisfying the fluent dependence condition of
Definition \ref{fluent.dependence.condition}, suppose ${\cal T}$ is a set of ground terms, and let $\sigma$ be a function satisfying Equation \ref{term.set.definition}.
Suppose $u[t] \in {\cal T}$ and $t$ is an action support. Suppose $t'$ is a term in
$T[F,X]$ such that $t \rightarrow t'$ is an action rule and $\sigma(u[t']) =
do(a,\sigma(u[t]))$.  Then $t$ is an {\em action support for action $a$ and term $u[t]$}
and $t \rightarrow t'$ is an {\em action rule for action $a$ and term $u[t]$}.
\end{definition}

The following definition specifies a way of choosing rewrite rules for $R_{\cal U}$ that
will tend to choose rules that rewrite small subterms of terms in ${\cal T}$.

\begin{definition}
Suppose ${\cal U}$ is a situation calculus satisfying the fluent dependence condition of
Definition \ref{fluent.dependence.condition}, suppose ${\cal T}$ is a set of ground terms, and let $\sigma$ be a function satisfying Equation \ref{term.set.definition}.  Let $A^1_{\sigma}$
be a set $\alpha_L \rightarrow \alpha_R$ of action rules for actions $a \in {\cal A}$
and terms $u \in {\cal T}$ such that one such rule is chosen for each action $a$ and each term
$u$.  The rule that is chosen is one such that $\alpha_L$ is
a minimal size action support for the action $a$ and the term $u$.  Let $E_{\sigma}$
be some set of rules
satisfying Equations \ref{state.closure.equation} and \ref{rearrangements.to.states.equation}.  Let $R^1_{\cal U}$  be $A^1_{\sigma} \cup
E_{\sigma}$.
\end{definition}

\begin{theorem}
$R^1_{\cal U}$ represents the theory ${\cal U}$.
\end{theorem}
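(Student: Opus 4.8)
The plan is to verify the clauses of the definition of \emph{represents} one at a time, following the proof for $R^0_{\cal U}$ and isolating the single new feature: that rules of $A^1_{\sigma}$ rewrite proper subterms. Equation~\ref{term.set.definition}, Definition~\ref{sigma.definition}, the two conditions on $E_{\sigma}$ (Equations~\ref{state.closure.equation} and \ref{rearrangements.to.states.equation} restricted to rearrangement rules, and invertibility from Definition~\ref{R sub U definition}), and the existence of a computable $\hat{\Phi}$ as in Definition~\ref{hat.Phi.definition} are all inherited from the hypotheses and the choice of $E_{\sigma}$, exactly as for $R^0_{\cal U}$. What needs genuine argument are the three action-rule axioms: Equation~\ref{state.closure.equation} for $\alpha \in A^1_{\sigma}$, Equation~\ref{rule.to.action.equation}, and Equation~\ref{do.rewrite.equation}.

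For soundness (Equations~\ref{state.closure.equation} and \ref{rule.to.action.equation}) I would take a rule $\alpha_L \rightarrow \alpha_R$ of $A^1_{\sigma}$ and a step $s \Rightarrow_{\alpha_L \rightarrow \alpha_R} t$ with $s \in {\cal T}$, so that $s = w[\alpha_L]$ and $t = w[\alpha_R]$ for some context $w$. Because $\alpha_L$ was chosen as an action support and $\alpha_L \rightarrow \alpha_R$ as an action rule, the defining property of an action rule, instantiated at the context $w$ for which $w[\alpha_L] = s \in {\cal T}$, delivers at once $t = w[\alpha_R] \in {\cal T}$ (Equation~\ref{state.closure.equation}) and an action $a \in {\cal A}$ with $\sigma(t) = do(a,\sigma(s))$ (Equation~\ref{rule.to.action.equation}). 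Here $a$ may depend on $w$, which is harmless and in fact matches the earlier remark that a single rule may correspond to several actions.

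For completeness (Equation~\ref{do.rewrite.equation}) I would fix $s' = \sigma(s)$, an action $a$, and $s,t \in {\cal T}$ with $\sigma(t) = do(a,\sigma(s))$. Instantiating the construction of $A^1_{\sigma}$ at the pair $(a,u)$ with $u = s$ yields a chosen rule $\alpha_L \rightarrow \alpha_R$, which is an action rule for $a$ and $s$, so $s = w[\alpha_L]$ and $\sigma(w[\alpha_R]) = do(a,\sigma(s))$. Put $v = w[\alpha_R]$; then $s \Rightarrow_{A^1_{\sigma}} v$ and $\sigma(v) = \sigma(t)$, whence Equation~\ref{rearrangements.to.states.equation} gives $v \Rightarrow^*_{E_{\sigma}} t$. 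With the empty left-hand segment this is exactly the factorization $s \Rightarrow^*_{E_{\sigma}} s \Rightarrow_{A^1_{\sigma}} v \Rightarrow^*_{E_{\sigma}} t$ required by Equation~\ref{do.rewrite.equation}.

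The part I expect to require the most care is the quantifier structure in the definition of an action rule. For the soundness step to survive firing $\alpha_L \rightarrow \alpha_R$ at an occurrence unrelated to the term $u$ used to select it, the fixed right-hand side $\alpha_R$ must satisfy $w[\alpha_R] \in {\cal T}$ for \emph{every} admissible context $w$, with only the witnessing action $a$ allowed to vary; I would read the action-rule definition in this way, which is precisely the reading under which the second paragraph is valid, and the variation of $a$ across occurrences is absorbed by the purely existential form of Equation~\ref{rule.to.action.equation}. A subsidiary point is that the construction of $A^1_{\sigma}$ presupposes an action support for each pair $(a,u)$; here I would fall back on the whole term $u$ with the trivial context, which always furnishes a ground action rule as in the $R^0_{\cal U}$ construction, so a minimal-size support always exists and $A^1_{\sigma}$ is well defined. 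For non-ground rules the matched ground instances $\alpha_L\theta$ and $\alpha_R\theta$ replace $\alpha_L$ and $\alpha_R$ throughout, and nothing else in the argument changes.
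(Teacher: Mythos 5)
Your proposal is correct and follows essentially the same route as the paper, whose entire proof is the assertion that the result is ``straightforward from the definitions'' together with the remark that the choices of action support and $\alpha_R$ do not matter; your write-up is a faithful elaboration of that verification, with soundness coming from the action-rule definition (read, as you note, with $\alpha_R$ uniform across contexts) and completeness from applying the rule chosen for $(a,s)$ followed by a single $E_{\sigma}$ step via Equation~\ref{rearrangements.to.states.equation}. The only divergence is cosmetic: where the paper credits the fluent dependence condition of Definition~\ref{fluent.dependence.condition} for the irrelevance of the choices, you obtain the same fact directly from Equation~\ref{rearrangements.to.states.equation}, which is where that condition has already been absorbed.
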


\begin{proof}
Again, this is straightforward from the definitions.  This set of rules may be much
smaller than $R^0_{\cal U}$ because of the use of minimal action support terms.
The choices of which action support term to use and which term $\alpha_R$ to use
do not matter, because of the fluent dependence condition of Definition \ref{fluent.dependence.condition}.
\end{proof}

In this respect, the switches example is interesting because a given rewrite rule can
express more than one action.  The rule $\mbox{off} \rightarrow \mbox{on}$, for example, can
express turning on any of the switches, depending on where it is used in a term in ${\cal T}$.

The next definition lifts $R^1_{\cal U}$ to a possibly non-ground term rewriting system
$R^2_{\cal U}$
that may be much more compact.

\begin{definition}
Suppose ${\cal U}$ is a situation calculus satisfying the fluent dependence condition of
Definition \ref{fluent.dependence.condition}, let ${\cal T}$ be a set of ground terms, and let $\sigma$ be a function satisfying Equation \ref{term.set.definition}.  
Let $A^2_{\sigma}$ be a set of possibly non-ground rules $\alpha_L \rightarrow \alpha_R$ such that
for all ground instances $\alpha_L \beta$ of $\alpha_L$ with $\alpha_L \beta \in {\cal T}$,
$\alpha_R \beta \in {\cal T}$ also and $\alpha_L\beta \rightarrow \alpha_R\beta \in
A^1_{\cal U}$.  Let $E_{\sigma}$
be some set of rules
satisfying Equations \ref{state.closure.equation} and \ref{rearrangements.to.states.equation}.  Let $R^2_{\cal U}$  be $A^2_{\sigma} \cup
E_{\sigma}$.
\end{definition}

\begin{theorem}
$R^2_{\cal U}$ represents the theory ${\cal U}$.
\end{theorem}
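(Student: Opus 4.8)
The plan is to reduce the claim to the already-established fact that $R^1_{\cal U}$ represents ${\cal U}$, by showing that $A^2_{\sigma}$ and $A^1_{\sigma}$ induce the same rewrite relation on the ground terms of ${\cal T}$. Because $R^2_{\cal U}$ is built from the same ${\cal T}$, $\sigma$, $\hat{\Phi}$, and $E_{\sigma}$ as $R^1_{\cal U}$, Definitions \ref{sigma.definition} and \ref{hat.Phi.definition}, the $E_{\sigma}$-clause of Definition \ref{R sub U definition}, Equation \ref{rearrangements.to.states.equation}, and the $E_{\sigma}$-part of Equation \ref{state.closure.equation} hold verbatim; only the three conditions that mention the action rules---Equation \ref{state.closure.equation} for $\alpha \in A^2_{\sigma}$, Equation \ref{rule.to.action.equation}, and Equation \ref{do.rewrite.equation}---require a fresh argument.

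The central step is a \emph{correspondence lemma}: for $s,t \in {\cal T}$, $s \Rightarrow_{A^2_{\sigma}} t$ iff $s \Rightarrow_{A^1_{\sigma}} t$. For \emph{soundness} (the forward direction), suppose $s \equiv C[\alpha_L \beta]$ and $t \equiv C[\alpha_R \beta]$ for a rule $\alpha_L \rightarrow \alpha_R \in A^2_{\sigma}$, a context $C[~]$, and a ground substitution $\beta$ (ground since $s \in {\cal T}$ is ground). By the defining property of $A^2_{\sigma}$ the ground instance $\alpha_L \beta \rightarrow \alpha_R \beta$ is a rule of $A^1_{\sigma}$, so the identical redex-in-context rewrite $C[\alpha_L \beta] \Rightarrow C[\alpha_R \beta]$ is an $A^1_{\sigma}$ step, giving $s \Rightarrow_{A^1_{\sigma}} t$. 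For \emph{completeness} (the backward direction) I rely on the intended coverage property of the lifting---that every rule of $A^1_{\sigma}$ is a ground instance of some rule of $A^2_{\sigma}$---so each $A^1_{\sigma}$ step is reproduced by its parent $A^2_{\sigma}$ rule. Granting the lemma, the three remaining conditions follow at once: Equation \ref{state.closure.equation} for $A^2_{\sigma}$ holds because $t$ is exactly the term produced by the corresponding $A^1_{\sigma}$ step, which lies in ${\cal T}$; Equation \ref{rule.to.action.equation} holds because that $A^1_{\sigma}$ step already supplies a computable action $a$ with $\sigma(t) = do(a,\sigma(s))$; and Equation \ref{do.rewrite.equation} holds because the witnessing $R^1_{\cal U}$ derivation $s \Rightarrow^*_{E_{\sigma}} u \Rightarrow_{A^1_{\sigma}} v \Rightarrow^*_{E_{\sigma}} t$ becomes an $R^2_{\cal U}$ derivation once the middle $A^1_{\sigma}$ step is replayed as an $A^2_{\sigma}$ step.

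The main obstacle is the handling of rewriting on proper subterms. The defining condition on $A^2_{\sigma}$ quantifies over ground instances $\alpha_L \beta$ with $\alpha_L \beta \in {\cal T}$, whereas a genuine subterm rewrite uses a redex $\alpha_L \beta$ that is a proper subterm of $s$ and typically does not lie in ${\cal T}$ by itself. To make soundness airtight I must argue that the redexes actually arising inside terms of ${\cal T}$ are precisely the ground instances the definition intends---that $\alpha_L \beta$, though not in ${\cal T}$, is an $A^1_{\sigma}$ action support whose rule $\alpha_L \beta \rightarrow \alpha_R \beta$ applies in the same context $C[~]$. This is exactly the content carried by the action-support machinery underlying $A^1_{\sigma}$, so the cleanest route is to read the condition on $A^2_{\sigma}$ as ``for every ground instance $\alpha_L \beta$ occurring as a subterm of some term in ${\cal T}$'' and then invoke the definition of action support to transfer the context $C[~]$ from the $A^2_{\sigma}$ step to the matching $A^1_{\sigma}$ step. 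Once this reading is fixed, the remainder is, as in the previous two theorems, a direct consequence of the fluent dependence condition of Definition \ref{fluent.dependence.condition} together with the fact that the particular choice of lifting does not affect the $\sigma$-images.
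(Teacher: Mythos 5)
Your proof takes essentially the same route as the paper's: the paper's entire argument is the assertion that the rewrite relation of $R^2_{\cal U}$ on ${\cal T}$ coincides with that of $R^1_{\cal U}$, which is exactly your correspondence lemma, after which representation transfers from the theorem for $R^1_{\cal U}$. Your two explicit caveats---that the lifting must cover every rule of $A^1_{\sigma}$ (otherwise $A^2_{\sigma} = \emptyset$ vacuously satisfies the stated definition and Equation \ref{do.rewrite.equation} fails), and that the condition on ground instances must be read over redexes occurring as subterms of terms in ${\cal T}$ rather than over instances lying in ${\cal T}$ itself---are faithful repairs of imprecision in the definition of $A^2_{\sigma}$ that the paper's one-line proof silently assumes.
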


\begin{proof}
Again, this is straightforward from the definitions.  The rewrite relation for
$R^2_{\cal U}$ on ${\cal T}$ is the same as that for $R^1_{\cal U}$. The set
$R^2_{\cal U}$ of rules may be much
smaller even than $R^1_{\cal U}$ because of the use of non-ground rules.
\end{proof}

For the first two examples, $E_{\sigma}$ is empty.  For the river crossing example,
$E_{\sigma}$ consists of rules that exchange adjacent elements of the lists in $t_1$ and
$t_5$.  For the blocks world example, $E_{\sigma}$ consists of the rules exchanging
adjacent elements of the list of towers of blocks.
The rules $R_{\cal U}$ given for the Tower of Hanoi problem are $R^1_{\cal U}$.
The rules given for the switches problem are also $R^1_{\cal U}$.  For both of these
examples, nothing can be gained by using non-ground rules.  For the crossing the
river example, using subterms helps to sort the lists in $t_1$ and $t_5$ and the use of non-ground terms
also helps in general so this is an example where $R^2_{\cal U}$ is better than 
$R^1_{\cal U}$.  For the blocks world example, both the use of subterms and the use of non-ground
tems contribute to making $R^2_{\cal U}$ more concise than $R^1_{\cal U}$ and
$R^0_{\cal U}$.

Now we examine which properties a term needs in order to be an action
support term, in order to gain more understanding of the construction of $R_{\cal U}$.

\begin{definition}
\label{F limited F expressive definition}
Suppose ${\cal F}'$ is a subset of ${\cal F}$.  A subset ${\cal T}'$ of ${\cal T}$ is
{\em ${\cal F}'$-limited} if for all pairs $t_1,t_2 \in {\cal T}'$, and all contexts $u$, if $u[t_1]$ and $u[t_2]$ are
two terms in ${\cal T}$, then $\hat{\Phi}(p,u[t_1])=\hat{\Phi}(p,u[t_2])$ for all
$p \in {\cal F} \setminus {\cal F}'$.  (The choice of $t_1$ or $t_2$ can only influence
fluents in ${\cal F}'$.)  The subset ${\cal T}'$ of ${\cal T}$ is
{\em ${\cal F}'$-expressive} if for all states $s \in {\cal S}$ such that $\hat{\Phi}(p,u[t_1])=\Phi(p,s)$ for all
$p \in {\cal F} \setminus {\cal F}'$, there is a term $t_2 \in {\cal T}'$
such that $\sigma(u[t_2])=s$.  (The choice of a term in the set can produce a
full combination of fluents for that position in $u$, in some sense.)
\end{definition}

\begin{definition}
\label{a limited definition}
An action $a \in {\cal A}$ is {\em ${\cal F}$-limited} if for all states $s \in {\cal S}$,
for all fluents $p \in {\cal F} \setminus {\cal F}'$, $\Phi(p,s) = \Phi(p,do(a,s))$ 
($a$ does not change any fluents outside of ${\cal F}'$)
and
if for all states $s_1, s_2 \in {\cal S}$, if $\Phi(p,s_1) = \Phi(p,s_2)$ for all $p \in {\cal F}'$
then $\Phi(p,do(a,s_1)) = \Phi(p,do(a,s_2))$ for all $p \in {\cal F}'$. (The action $a$ does not
depend on any fluents outside of ${\cal F}'$).
\end{definition}

The following result gives sufficient conditions for a rewrite rule $t \rightarrow t'$ to exist that represents
an action, at least when it is applied to the term $t$.

\begin{theorem}
If $u[t]$ is a term in ${\cal T}$, the set of $t'$ such that $u[t'] \in {\cal T}$ is
${\cal F}'$ limited and ${\cal F}'$ expressive for some ${\cal F}' \subseteq {\cal F}$,
then for every ${\cal F'}$ limited action $a \in {\cal A}$ and every term $t$ such that $u[t] \in {\cal T}$,
there is a term $t'$ such that $u[t'] \in {\cal T}$ also and $\sigma(u[t']) =
do(a,\sigma(u[t]))$.
\end{theorem}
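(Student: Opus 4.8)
The plan is to reduce the statement to a single application of ${\cal F}'$-expressiveness, after verifying that its precondition is met. Write ${\cal T}' = \{t' : u[t'] \in {\cal T}\}$ for the set assumed to be ${\cal F}'$-limited and ${\cal F}'$-expressive, and fix an arbitrary term $t$ with $u[t] \in {\cal T}$, so that $t \in {\cal T}'$. Set $s = \sigma(u[t])$, which lies in ${\cal S}$ since $\sigma : {\cal T} \rightarrow {\cal S}$, and let $s'' = do(a,s)$, which is again a state in ${\cal S}$ because $a \in {\cal A}$. The goal is to produce a term $t' \in {\cal T}'$ with $\sigma(u[t']) = s''$; then $u[t'] \in {\cal T}$ and $\sigma(u[t']) = do(a,\sigma(u[t]))$, exactly as required.

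The substantive step is to check that $s''$ agrees with $u[t]$ on every fluent outside ${\cal F}'$. Fix $p \in {\cal F} \setminus {\cal F}'$. Since $a$ is ${\cal F}'$-limited, the first clause of Definition \ref{a limited definition} gives $\Phi(p,do(a,s)) = \Phi(p,s)$, hence $\Phi(p,s'') = \Phi(p,s)$. By Definition \ref{hat.Phi.definition}, $\Phi(p,s) = \Phi(p,\sigma(u[t])) = \hat{\Phi}(p,u[t])$. Combining, $\hat{\Phi}(p,u[t]) = \Phi(p,s'')$ for all $p \in {\cal F} \setminus {\cal F}'$. Because ${\cal T}'$ is ${\cal F}'$-limited, this common value is independent of which representative of ${\cal T}'$ is placed in the context $u$ on the fluents of ${\cal F} \setminus {\cal F}'$, so it coincides with the reference value $\hat{\Phi}(p,u[t_1])$ appearing in the expressiveness condition (one may simply take $t_1 = t$). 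This is precisely the hypothesis of ${\cal F}'$-expressiveness applied to the state $s''$.

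I would then invoke ${\cal F}'$-expressiveness directly: since $s'' \in {\cal S}$ satisfies $\hat{\Phi}(p,u[t_1]) = \Phi(p,s'')$ for all $p \in {\cal F} \setminus {\cal F}'$, there is a term $t_2 \in {\cal T}'$ with $\sigma(u[t_2]) = s''$. Taking $t' = t_2$ finishes the argument, since $t' \in {\cal T}'$ forces $u[t'] \in {\cal T}$ and $\sigma(u[t']) = s'' = do(a,\sigma(u[t]))$.

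The proof is mostly bookkeeping, and its only delicate point is the reconciliation carried out in the second paragraph: the action $a$ leaves the fluents outside ${\cal F}'$ untouched (by its ${\cal F}'$-limitedness), while the context $u$ pins those same fluents to a value that is constant across ${\cal T}'$ (by the ${\cal F}'$-limitedness of the set), and one must confirm that $\hat{\Phi}$ on terms and $\Phi$ on states line up through $\sigma$ so that $s''$ genuinely lands in the range addressed by expressiveness. The second clause of the ${\cal F}'$-limited action definition (that $a$'s effect on ${\cal F}'$ depends only on ${\cal F}'$) plays no role in this existence claim; it becomes relevant only when one wants a single rule to act uniformly across different terms $u[t]$, as in the passage to the non-ground rules of $R^2_{\cal U}$.
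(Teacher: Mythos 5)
Your proposal is correct and takes essentially the same approach as the paper's own proof, which simply notes that the ${\cal F}'$-expressive condition guarantees the existence of $t'$ while the action's ${\cal F}'$-limitedness keeps the fluents outside ${\cal F}'$ fixed. You have merely filled in the bookkeeping the paper leaves implicit (translating between $\hat{\Phi}$ on terms and $\Phi$ on states via $\sigma$, and taking $t_1 = t$ in the expressiveness condition), and your closing remark that the second clause of the ${\cal F}'$-limited action definition plays no role in this existence claim is accurate.
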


\begin{proof}
The ${\cal F}'$ expressive condition guarantees that such a term $t'$ exists. If the action
a is not ${\cal F}'$ limited then it would have to change some of the term structure outside
the occurrence of $t$.
\end{proof}

The following definition and theorem give a weak necessary condition for a rewrite rule on
a subterm to exist that expresses an action.

\begin{definition}
A set of terms is {\em weakly ${\cal F}'$ expressive} for ${\cal F}' \subseteq {\cal F}$
if there is a context $u$ with $u[t] \in {\cal T}$ for some term $t$ and there are
at least two terms $t_1$ and $t_2$ such that $u[t_1]$ and $u[t_2]$ are both in ${\cal T}$,
$\hat{\Phi}(p,u[t_1]) = \hat{\Phi}(p,u[t_2])$ for all $p \in {\cal F} \setminus {\cal F}'$,
but $\hat{\Phi}(p,u[t_1]) \ne \hat{\Phi}(p,u[t_2])$ for some $p \in {\cal F}'$.
\end{definition}

\begin{theorem}
Suppose $a \in {\cal A}$, $a$ is ${\cal F}'$ limited for some ${\cal F}' \subseteq {\cal F}$,
$u[t] \in {\cal T}$ for some term $u[t]$, there is a term
$t'$ such that $\sigma(u[t']) = do(a, \sigma(u[t])$, and for some $p \in {\cal F}'$,
$\hat{\Phi}(p,u[t']) \ne \hat{\Phi}(p,u[t])$.  Then the set of terms $t$ such
that $u[t] \in {\cal T}$ is weakly ${\cal F}'$ expressive.
\end{theorem}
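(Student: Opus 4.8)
The plan is to verify the definition of weak ${\cal F}'$ expressiveness directly, using the two terms already named in the hypotheses as the required witnesses. Concretely, I would keep the context $u$ from the statement and take $t_1 = t$ and $t_2 = t'$. The definition asks for a context $u$ and two terms $t_1, t_2$ satisfying (i) $u[t_1], u[t_2] \in {\cal T}$, (ii) $\hat{\Phi}(p, u[t_1]) = \hat{\Phi}(p, u[t_2])$ for all $p \in {\cal F} \setminus {\cal F}'$, and (iii) $\hat{\Phi}(p, u[t_1]) \ne \hat{\Phi}(p, u[t_2])$ for some $p \in {\cal F}'$. I would check these three conditions in turn.

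For condition (i), the membership $u[t] \in {\cal T}$ is assumed outright, and $u[t'] \in {\cal T}$ follows because the hypothesis forms $\sigma(u[t'])$, an expression that is only defined when $u[t'] \in {\cal T}$ (recall $\sigma : {\cal T} \rightarrow {\cal S}$ in Definition \ref{sigma.definition}). Condition (iii) is literally one of the hypotheses, namely that $\hat{\Phi}(p, u[t']) \ne \hat{\Phi}(p, u[t])$ for some $p \in {\cal F}'$. So the only condition requiring any argument is (ii), and this is the single place where the ${\cal F}'$-limitedness of $a$ is used.

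For condition (ii), I would set $s = \sigma(u[t])$ and recall from the hypothesis that $\sigma(u[t']) = do(a, s)$. Since $a$ is ${\cal F}'$-limited, the first clause of Definition \ref{a limited definition} gives $\Phi(p, s) = \Phi(p, do(a, s))$ for every state and every $p \in {\cal F} \setminus {\cal F}'$. Unwinding $\hat{\Phi}(p, w) = \Phi(p, \sigma(w))$ from Definition \ref{hat.Phi.definition}, this yields $\hat{\Phi}(p, u[t]) = \Phi(p, \sigma(u[t])) = \Phi(p, do(a, \sigma(u[t]))) = \Phi(p, \sigma(u[t'])) = \hat{\Phi}(p, u[t'])$ for all $p \in {\cal F} \setminus {\cal F}'$, which is exactly condition (ii).

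I do not anticipate a genuine obstacle; the statement is essentially a repackaging of the fact that an ${\cal F}'$-limited action leaves every fluent outside ${\cal F}'$ unchanged, so that $u[t]$ and $u[t']$ differ (in fluent value) only within ${\cal F}'$. The one point deserving a word of care is that the definition demands \emph{at least two} terms, so $t_1$ and $t_2$ must be genuinely distinct; but this is automatic, since condition (iii) forces $\hat{\Phi}(p, u[t]) \ne \hat{\Phi}(p, u[t'])$ for some $p$, whence $\sigma(u[t]) \ne \sigma(u[t'])$ and therefore $t \ne t'$.
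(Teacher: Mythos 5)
Your proof is correct and takes essentially the same route as the paper's: both use $t_1 = t$ and $t_2 = t'$ as the witnesses, invoke the ${\cal F}'$-limitedness of $a$ to get agreement of $\hat{\Phi}$ on all fluents outside ${\cal F}'$, and cite the hypothesis directly for the disagreement on some fluent in ${\cal F}'$. Your version merely spells out details the paper leaves implicit, namely the membership check $u[t'] \in {\cal T}$ and the distinctness of the two witness terms.
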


\begin{proof}
For $t_1$ and $t_2$ one takes the terms $t$ and $t'$ of the theorem.  These terms
do not agree on all fluents because this is stated in the theorem, and it seems reasonable
for an action to change at least one fluent in many cases.  The terms $t$ and $t'$
agree on all fluents not in ${\cal F}'$ because the action $a$ is ${\cal F}'$ limited.
\end{proof}

\section{Conclusion}
After a brief survey of the situation calculus, term-rewriting systems are introduced and
situation calculus concepts are presented.  Next an approach to encoding the situation
calculus by term rewriting is presented.  A general result for planning using this
approach is given.  Four examples illustrate the properties of this approach.  General methods for constructing term rewriting systems embodying this approach are given,
and special attention is given to rewriting on subterms, which corresponds to encoding
frame axioms in the underlying theory, and lifting the rewrite rules to non-ground rules.
Finally, some results are given about sufficient conditions and a necessary condition for a rewrite
rule to exist that represents an action on a specific term.

\bibliography{paper}

\begin{thebibliography}{10}

\bibitem{bani:98}
F.~Baader and T.~Nipkow.
\newblock {\em Term Rewriting and All That}.
\newblock Cambridge University Press, 1998.

\bibitem{Davis:02}
E.~Davis.
\newblock {\em Representations of Commonsense Knowledge}.
\newblock Morgan Kaufmann, 1990.

\bibitem{Haas:87}
A.~R. Haas.
\newblock The case for domain-specic frame axioms.
\newblock In F.~M. Brown, editor, {\em The Frame Problem in artificial
  intelligence. Proceedings of the 1987 workshop}, pages 343--348. Morgan
  Kaufmann, 1987.

\bibitem{Lin:08}
Fangzhen Lin.
\newblock Situation calculus.
\newblock In Frank van Harmelen, Vladimir Lifschitz, and Bruce Porter, editors,
  {\em Handbook of Knowledge Representation}, pages 649--669. Elsevier, 2008.

\bibitem{McHa:69}
J.~McCarthy and P.~Hayes.
\newblock Some philosophical problems from the standpoint of artificial
  intelligence.
\newblock In {\em Machine Intelligence 4}, pages 463--502. Edinburgh University
  Press, 1969.

\bibitem{Pednault:89}
E.~P.~D. Pednault.
\newblock {ADL}: Exploring the middle ground between {STRIPS} and the situation
  calculus.
\newblock In {\em Proceedings of the International Conference on Principles of
  Knowledge Representation (KR-98)}, pages 324--332. Morgan Kaufmann, Inc.,
  1989.

\bibitem{Petrick08}
Ronald P.~A. Petrick.
\newblock Cartesian situations and knowledge decomposition in the situation
  calculus.
\newblock In {\em Principles of Knowledge Representation and Reasoning:
  Proceedings of the Eleventh International Conference, {KR} 2008, Sydney,
  Australia, September 16-19, 2008}, pages 629--639, 2008.
\newblock URL: \url{http://www.aaai.org/Library/KR/2008/kr08-062.php}.

\bibitem{Reiter:91}
Raymond Reiter.
\newblock The frame problem in the situation calculus: a simple solution
  (sometimes) and a completeness result for goal regression.
\newblock In Vladimir Lifschitz, editor, {\em Artificial Intelligence and
  Mathematical Theory of Computation: Papers in Honor of John McCarthy}, pages
  359--380. Academic Press, 1991.

\bibitem{schubert90monotonic}
Lehnart Schubert.
\newblock Monotonic solution of the frame problem in the situation calculus: An
  efficient method for worlds with fully specified actions.
\newblock In Henry~E. Kyburg, Ronald~P. Loui, and Greg~N. Carlson, editors,
  {\em Knowledge Representation and Defeasible Reasoning}, volume Volume 5,
  pages 23--67. Kluwer Academic Publishers, Dordrecht / Boston / London, 1990.
\newblock URL: \url{citeseer.nj.nec.com/schubert90monotonic.html}.

\bibitem{Thielscher98}
Michael Thielscher.
\newblock Introduction to the fluent calculus.
\newblock {\em Electron. Trans. Artif. Intell.}, 2:179--192, 1998.

\end{thebibliography}

\end{document}